\begin{document}
\begin{frontmatter}

\title{Constrained Policy Optimization via Sampling-Based Weight-Space Projection\thanksref{footnoteinfo}} 

\thanks[footnoteinfo]{The work is sponsored by the Department of the Navy, Office of Naval Research ONR N00014--24--2099.  The content of the information does not necessarily reflect the position or the policy of the Government, and no official endorsement should be inferred.}

\author[UCB]{Shengfan Cao} 
\author[UCB]{Francesco Borrelli} 
\author[Second]{Eunhyek Joa}

\address[UCB]{University of California at Berkeley, 
   Berkeley, CA 94720 USA (e-mail: \{shengfan\_cao, fborrelli\}@berkeley.edu).}
\address[Second]{Department of Mechanical Engineering,
   Seoul National University, Seoul, Korea, (e-mail: eunhyekj@snu.ac.kr)}

\begin{abstract}                
    Safety-critical learning requires policies that improve performance without leaving the safe operating regime. We study constrained policy learning where model parameters must satisfy rollout-based safety constraints that can be evaluated but not differentiated analytically. We propose SCPO, a sampling-based weight-space projection method that enforces safety directly in parameter space without requiring gradient access to the constraint functions. SCPO constructs a local safe region by combining rollout-based safety evaluations with smoothness bounds relating parameter perturbations to changes in safety metrics, and projects each gradient update via a convex QCQP. We establish a safe-by-induction guarantee: starting from any safe initialization, all intermediate policies remain safe given feasible projections. In constrained control settings with a stabilizing backup policy, SCPO further ensures closed-loop stability while enabling safe adaptation beyond the conservative backup. Experiments on constrained regression with harmful supervision and double-integrator imitation with a malicious expert show that SCPO rejects unsafe updates, maintains feasibility throughout training, and achieves meaningful objective improvement.
\end{abstract}

\end{frontmatter}

\newcommand{\pisafe}{\pi_{\text{safe}}}
\newenvironment{theorem}{\begin{thm}}{\end{thm}}
\newenvironment{proof}{\begin{pf}}{\end{pf}}
\newenvironment{lemma}{\begin{lem}}{\end{lem}}
\newenvironment{remark}{\begin{rem}}{\end{rem}}
\newenvironment{definition}{\begin{defn}}{\end{defn}}
\newenvironment{assumption}{\begin{assum}}{\end{assum}}
\newenvironment{proposition}{\begin{prop}}{\end{prop}}

\newif\ifextended
\extendedtrue

\section{Introduction}

Safety is a fundamental requirement for deploying learning-based controllers in domains such as autonomous driving and robotics. 
Deep neural network (NN) policies can achieve high performance, but formal guarantees for constraint satisfaction remain challenging. 
In practice, real-world systems often use a backup controller that outputs safe but conservative actions. 
While such backups can prevent catastrophic failures~\citep{wagenerSafeReinforcementLearning2021,zhangSafetyCorrectionBaseline2022}, they are typically used as hard switches and are not integrated into the learning process.

A key observation in this work is that a backup controller need not only be a fail-safe mechanism; it can serve as a feasible initialization for safe adaptation.
Inspired by low-rank adaptation (LoRA)~\citep{huLoRALowRankAdaptation2021}, we treat a certified backup controller as a frozen base policy and augment it with a NN policy that outputs zero at initialization. 
The combined policy therefore coincides with the backup controller at the start of training, and subsequent learning can be viewed as safe adaptation in the neighborhood of a known safe policy.

Existing work on safe adaptation can be roughly grouped into three categories. 
\emph{Output-level safety} uses safety filters, control barrier functions, or MPC-based shields to modify the policy's action post-hoc; these methods offer strong guarantees but the NN is typically unaware of the filter. 
\emph{Data-level safety} removes unsafe trajectories or biases sampling toward safe regions, but is less applicable to on-policy learning with limited data. 
\emph{Training-time safety} constrains the optimization process through trust regions, constrained policy gradients, or gradient projections, often in CMDP settings. 

We propose SCPO, a data-driven, projection-based method for constrained learning that provides stability and recursive feasibility certification. 
The key idea is to locally estimate a safe set in weight space from small perturbations in weights and the associated changes in safety metrics. 
Using sampled rollouts and local smoothness assumptions, we construct a tractable quadratic inner approximation and project each raw gradient update onto it, preserving safety on all considered rollouts at every training step.

Our main contributions are: (i) a sampling-based constrained policy optimization that performs weight-space safety projection; (ii) a safe-by-induction learning mechanism for monotonic improvement and persistent constraint satisfaction; (iii) a control-theoretic analysis of SCPO for linear dynamical systems; and (iv) two simple experiments demonstrating robustness against harmful supervision and safe adaptation.

\section{Related Work}
Safe learning has been extensively studied in CMDPs and robotics; see~\cite{garciaComprehensiveSurveySafe2015} and~\cite{guReviewSafeReinforcement2024} for a comprehensive overview. 

\subsubsection{Primal-dual and trust-region methods}
A large body of work formulates safety as expected accumulated cost and applies Lagrangian or primal-dual optimization to CMDPs. 
CRPO~\citep{xuCRPONewApproach2021} alternates between reward maximization and cost minimization, while NPG-PD~\citep{dingNaturalPolicyGradient2020} performs natural policy gradient ascent on primal variables and sub-gradient descent on dual variables. 
TRPO~\citep{schulmanTrustRegionPolicy2017} and PPO~\citep{schulmanProximalPolicyOptimization2017} restrict each update by constraining the expected KL-divergence between successive policies. 
These approaches focus on expected accumulated cost or expected return, and typically do not ensure trajectory-level safety during training.

\subsubsection{Projected updates} 
Projected-update methods enforce safety by projecting a potentially unsafe update onto a safe set. 
CPO~\citep{achiamConstrainedPolicyOptimization2017} optimizes a surrogate reward subject to linearized cost constraints and a KL trust-region in policy space, while Lyapunov-based methods~\citep{chowLyapunovbasedApproachSafe2018} represent safety as linear constraints within dynamic programming algorithms. 
In LLMs, SafeLoRA~\citep{hsuSafeLoRASilver2025} and SaLoRA~\citep{liSaLoRASafetyAlignmentPreserved2025} project LoRA updates to be approximately aligned with a safety-aligned subspace. 
In contrast, we approximate a feasible set in weight space induced purely by a backup controller and safety metrics, without explicit constraint costs or an aligned reference model.

\subsubsection{Projected action/policy}
Projection in the action or policy space is another popular approach. 
For example, MPC-based projection can be imposed on the action output to enforce recursive feasibility~\citep{grosSafeReinforcementLearning2020}. 
Other approaches learn a mapping from an unconstrained high-dimensional space to the feasible set~\citep{liuRobotReinforcementLearning2022}, or use differentiable projection layers for specific QP problems~\citep{minHardNetHardConstrainedNeural2025}. 
These methods guarantee safety via the projection layer or safety filter, but the underlying policy parameters themselves are not certified and the constraint set must be explicitly known.

\section{Preliminaries}

\subsection{System and Optimal Control Task}
Consider a nonlinear, time-invariant, discrete-time, deterministic system described by:
\begin{equation}\label{eq:sys_dynamics}
   \begin{aligned}
      x_{k+1} = f(x_k, u_k),
   \end{aligned}
\end{equation}
where $x_k$ is the state and $u_k$ is the control input. 
The system is subject to constraints:
\begin{equation}
   x_k \in \mathcal{X},~u_k \in \mathcal{U},~\forall k \geq 0.
\end{equation}
We consider the infinite-horizon constrained optimal control problem:
\begin{equation}
\label{eq:control_task_formulation}
\begin{aligned}
    \min_{\mathbf{u}_{0:\infty}} ~~ & \sum_{k=0}^{\infty} c(x_k, u_k) \\
    \textnormal{s.t.} ~~
    & x_{k+1} = f(x_k, u_k), ~~ x_0 \in \mathcal{X}_0, \\
    & x_k \in \mathcal{X}, ~~ u_k \in \mathcal{U}, ~~ \forall k \in \mathbb{N},
\end{aligned}
\end{equation}
where $\mathcal{X}_0 \subseteq \mathcal{X}$ is the set of initial states, $c : \mathbb{R}^{n_x} \times \mathbb{R}^{n_u} \to \mathbb{R}_{\ge 0}$ is a continuous, non-negative stage cost.

\subsection{Baseline Controller}

\begin{assumption}[Baseline controller]\label{asm:backup-controller}
   Assume we are given a non-parametric feedback controller $\pisafe: \mathcal{X} \to \mathcal{U}$ and a set $\Omega \subseteq \mathcal{X}$ with $x_e \in \Omega$, such that
   \begin{equation}
      x_e = f(x_e, \pisafe(x_e)).
   \end{equation}
   Furthermore, suppose there exists a continuously differentiable Lyapunov function $V: \Omega \to \mathbb{R}_{\ge 0}$ and class-$\mathcal{K}$ functions $\underline{\alpha}, \overline{\alpha}$, and $\alpha$ such that, for all $ x \in \Omega$, 
   \begin{align}
      \pisafe(x) &\in \mathcal{U}, \\
      f(x, \pisafe(x)) &\in \Omega, \\
      \underline{\alpha}(\Vert x - x_e \Vert) \leq V(x) &\leq \overline{\alpha}(\Vert x - x_e \Vert), \\
      V(f(x, \pisafe(x))) - V(x) &\leq -\alpha(\Vert x - x_e \Vert).
   \end{align}
\end{assumption}

\subsection{Residual Controller}
The safe controller $\pisafe$ need not be high-performing with respect to~\eqref{eq:control_task_formulation}. 
In this work, we treat $\pisafe$ as the initial policy, and add a residual policy in parallel with $\pisafe$ to improve the performance indicated by~\eqref{eq:control_task_formulation} while ensuring the combined controller remains stabilizing in $\Omega$. 
Specifically, define the target set
$\mathcal{X}_f = \mathcal{B}_\epsilon(x_e) \subseteq \Omega$
as a smaller neighborhood around $x_e$, where $\epsilon > 0$.
Let our feedback controller be: 
\begin{equation}\label{eq:controller}
   \pi_\theta(x) = \begin{cases}    
      \pi_{\text{safe}}(x) + \phi_\theta(x),& x \notin \mathcal{X}_f, \\
      \pi_{\text{safe}}(x),& x \in \mathcal{X}_f,
   \end{cases}
\end{equation}
where $\phi_\theta: \mathcal{X} \to \mathbb{R}^{n_u}$ is a neural network parameterized by $\theta$. 
Note that $x_e$ is also an equilibrium for the closed-loop system $x_{k+1} = f(x_k, \pi_\theta(x_k))$ because $\pi_\theta = \pisafe$ near $x_e$. 

Inspired by LoRA~\citep{huLoRALowRankAdaptation2021}, we may initialize the weights $\theta_0$ such that the final linear layers are initialized to zeros, as illustrated in Fig.~\ref{fig:architecture}, which guarantees $\theta_0$ satisfies $\phi_{\theta_0}(x) \equiv \mathbf{0}$. Therefore $\pi_{\theta_0}(x) = \pi_{\text{safe}}(x), ~\forall x \in \mathcal{X}$. 
\begin{figure}[htbp]
    \centering
    \includegraphics[width=0.65\linewidth]{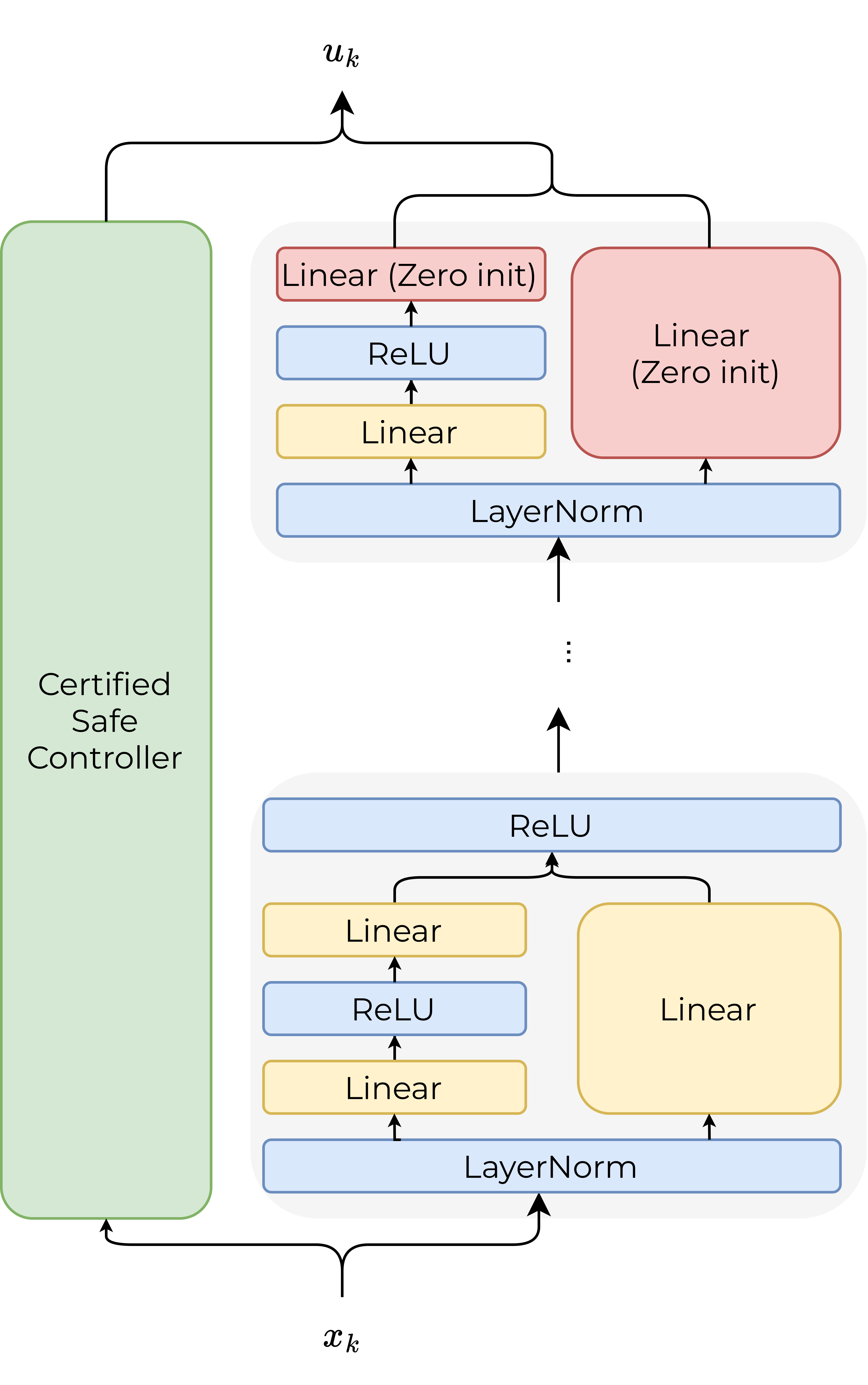}
    \caption{\textbf{Architecture of $\pi_\theta$} in this work. $\pi_{\text{safe}}$ (left) is a certified safe controller, while the neural network $\phi_\theta$ (right) acts as a residual policy. The final linear layers (red) are zero-initialized so the initial policy matches $\pi_{\text{safe}}$. The gray block is repeated six times.}
    \label{fig:architecture}
\end{figure}

\begin{remark}
Although $\pi_{\text{safe}}$ could in principle be high-capacity, in many safety-critical settings a simple stabilizing controller is easier to certify and makes the initialization $\pi_\theta=\pi_{\text{safe}}$ (with $\phi_\theta=\mathbf{0}$) trivially safe. 
By contrast, encoding such conservative behavior directly inside a deep network is harder, so we delegate performance improvements to the expressive residual $\phi_\theta$, which is subsequently projected to preserve safety.
\end{remark}

\subsection{Stability Under Lyapunov-Margin Preservation}
\begin{proposition}\label{prop:stability-under-lyap-margin-preservation}
   Suppose Assumption~\ref{asm:backup-controller} holds. 
   Let $\pi_\theta$ be a parametric feedback controller of the form~\eqref{eq:controller} that satisfies
   \begin{equation}
      x_e = f(x_e, \pi_\theta(x_e)).
   \end{equation}
   If there exists $\gamma \in [0, 1)$ such that, for all $x \in \Omega$, 
   \begin{align}
      \pi_\theta(x) &\in \mathcal{U}, \label{eq:input-feasibility}\\
      f(x, \pi_\theta(x)) &\in \Omega, \label{eq:state-feasibility}\\
      V(f(x, \pi_\theta(x))) - V(x) 
      &\leq 
      -(1 - \gamma) \alpha(\Vert x - x_e \Vert), \label{eq:lyap-margin-preservation}
   \end{align}
   then $\Omega$ is forward invariant under $\pi_\theta$, and $x_e$ is locally asymptotically stable for the closed-loop system $x_{k+1} = f(x_k, \pi_\theta(x_k))$. 
\end{proposition}

Proposition~\ref{prop:stability-under-lyap-margin-preservation} follows from standard discrete-time Lyapunov stability arguments~(e.g.,~\citealt{khalil_nonlinear_2002}). 
We interpret the conditions in~\eqref{eq:input-feasibility}--\eqref{eq:lyap-margin-preservation} as constraints on $\theta$, which we enforce during training in our proposed approach. 

\section{Problem Formulation}\label{sec:projection}

Our goal is to construct a sequence of policy weights $\theta_0,\theta_1,\ldots$ such that the corresponding controllers $\pi_{\theta_i}$, defined in~\eqref{eq:controller}, improve the task objective in~\eqref{eq:control_task_formulation} while remaining stabilizing on the certified region $\Omega$ for all iterations
$i$.

\subsection{Constrained Adaptation}

At each iteration, we assume that the current weights $\theta_t \in
\mathbb{R}^d$ are feasible, i.e.,
\begin{equation}
   g(\theta_t) \leq \mathbf{0},
\end{equation}
where $g:\mathbb{R}^d \to \mathbb{R}^{n_g}$ denotes a vector of safety
constraint functions. Our goal is to compute an update
$\Delta\theta = \theta_{t+1}-\theta_t$ that improves a differentiable
task loss while preserving feasibility:
\begin{equation}
    \label{eq:residual_learning_problem}
    \begin{aligned}
        & \min_{\Delta \theta} \quad \mathcal{L}(\theta_t + \Delta \theta) \\
        & \text{s.t.} \quad g(\theta_t + \Delta \theta) \le \mathbf{0}.
    \end{aligned}
\end{equation}
The inequality is understood componentwise.

In general, the safety map $g$ may be difficult to express analytically
for neural policies. We therefore assume that $g(\theta)$ can be evaluated
pointwise, but do not require closed-form access to its Jacobian. The
projection step introduced below is constructed from sampled evaluations
of $g$ around the current iterate.

To keep the approximation local, we restrict each update to a trust
region:
\begin{equation}
   \Vert \Delta \theta \Vert_2 \leq r.
\end{equation}
We also assume that the safety constraints are locally smooth within this
region.

\begin{assumption}\label{ass:l_smooth}
For each component $g_i$, there exists $\ell_i \geq 0$ such that, for all
$\Delta\theta$ satisfying $\Vert\Delta\theta\Vert_2 \leq r$,
\[
\left|
g_i(\theta_t+\Delta\theta)
-
g_i(\theta_t)
-
\nabla_\theta g_i(\theta_t)^\top \Delta\theta
\right|
\le
\frac{\ell_i}{2}\Vert\Delta\theta\Vert_2^2 .
\]
\end{assumption}

\begin{remark}
The constants $\ell_i$ are difficult to compute exactly for neural policies. In practice, we initialize them using the maximum observed pairwise slope over sampled updates, which is only meaningful locally and motivates the trust-region constraint $\Vert\Delta\theta\Vert_2\le r$. Since the residual policy is initialized from the backup controller, this also keeps learning in a local fine-tuning regime where the curvature of $g$ is more stable empirically. If the quadratic bound is violated, we increase $\ell_i$ geometrically by a factor of $\kappa$ and recompute the update.
\end{remark}


\subsection{Local Quadratic Surrogate}

At iteration $t$, we construct a local surrogate of~\eqref{eq:residual_learning_problem} around the current feasible weights
$\theta_t$. For the task objective, we use a first-order Taylor expansion
of $\mathcal L$ together with a regularization term:
\begin{equation}\label{eq:surrogate-loss}
\hat{\mathcal{L}}(\theta_t + \Delta \theta)
=
\mathcal{L}(\theta_t)
+ \nabla_\theta \mathcal{L}(\theta_t)^\top \Delta \theta
+ \lambda \| \Delta \theta \|_2^2,
\end{equation}
where $\lambda>0$ controls the step size.

The unconstrained minimizer of~\eqref{eq:surrogate-loss} is
\begin{equation}\label{eq:unconstrained-gradient-step}
    \Delta \theta_{\mathrm{raw}}
    \triangleq
    -\eta \nabla_\theta \mathcal{L}(\theta_t),
    \qquad
    \eta = \frac{1}{2\lambda}.
\end{equation}
Equivalently, up to an additive constant,
\begin{equation}
    \hat{\mathcal{L}}(\theta_t + \Delta \theta)
    =
    \lambda
    \Vert \Delta \theta - \Delta \theta_{\mathrm{raw}}\Vert_2^2
    + \mathrm{const}.
\end{equation}

For the constraints, Assumption~\ref{ass:l_smooth} implies that, for each
$i$ and all $\Vert \Delta \theta \Vert_2 \le r$,
\begin{equation}
    g_i(\theta_t+\Delta\theta)
    \le
    g_i(\theta_t)
    +
    \nabla_\theta g_i(\theta_t)^\top \Delta\theta
    +
    \frac{\ell_i}{2}\|\Delta\theta\|_2^2 .
\end{equation}
Therefore, a conservative sufficient condition for
$g_i(\theta_t+\Delta\theta)\le 0$ is
\begin{equation}
    g_i(\theta_t)
    +
    \nabla_\theta g_i(\theta_t)^\top \Delta\theta
    +
    \frac{\ell_i}{2}\|\Delta\theta\|_2^2
    \le 0,
    \qquad i=1,\ldots,n_g.
\end{equation}

Let $J_g(\theta_t)\in\mathbb R^{n_g \times d}$ denote the Jacobian of $g$
at $\theta_t$, and let $L=[\ell_1,\ldots,\ell_{n_g}]^\top$. The local surrogate
projection problem is
\begin{equation}
    \label{eq:safety_projection_problem_linear}
    \begin{aligned}
        \min_{\Delta \theta} \quad
            & \| \Delta \theta - \Delta \theta_{\mathrm{raw}} \|_2^2 \\
        \mathrm{s.t.} \quad
            & g(\theta_t) + J_g(\theta_t)\Delta \theta
            + \frac{1}{2} \Vert \Delta \theta \Vert_2^2 L
            \le \mathbf{0}, \\
            & \|\Delta\theta\|_2 \le r .
    \end{aligned}
\end{equation}

\section{Sampling-Based Weight-Space Projection}
Directly solving~\eqref{eq:safety_projection_problem_linear} in the full parameter space is generally impractical for large neural policies because the dimension $d$ is large, and the Jacobian $J_g(\theta_t)$ is typically unavailable in closed form. We therefore construct a conservative sampling-based approximation using only pointwise evaluations of $g$ at a finite set of candidate updates.

\subsection{Sampling-Based Projection}

At iteration $t$, suppose we have a nonempty bank of candidate updates
\begin{equation}
   D =
   \begin{bmatrix}
      \Delta \theta^{(1)} & \cdots & \Delta \theta^{(m)}
   \end{bmatrix}
   \in \mathbb{R}^{d \times m},
\end{equation}
with $\Vert \Delta\theta^{(i)} \Vert_2 \le r$ for all $i$, together with their safety
evaluations
\begin{equation}
   G =
   \begin{bmatrix}
      g^{(1)} & \cdots & g^{(m)}
   \end{bmatrix}
   \in \mathbb{R}^{n_g \times m},
   \qquad
   g^{(i)} \triangleq g(\theta_t + \Delta \theta^{(i)}).
\end{equation}
In practice, $D$ and $G$ are obtained by caching recent model updates and
their corresponding safety evaluations during training. We assume that
the most recent unconstrained update is included as the last candidate,
i.e.,
\begin{equation}
   \Delta \theta_{\mathrm{raw}} = \Delta \theta^{(m)}, 
\end{equation}
which we scale down to ensure it lies in the trust region:
\begin{equation}
   \Vert \Delta\theta_{\mathrm{raw}} \Vert_2 \le r .
\end{equation}
As shown below, because the sampled projection is a Euclidean projection
onto a convex set containing the zero update, the projected update cannot
have larger norm than $\Delta\theta_{\mathrm{raw}}$. Hence the projected
update also remains in the trust region.

Instead of optimizing over all $\Delta\theta\in\mathbb R^d$, we restrict
the update to the span of the sampled candidates:
\begin{equation}
    \Delta\theta = D\xi,
    \qquad \xi\in\mathbb R^m.
\end{equation}
Let
\begin{equation}
   S = D^\top D,
   ~~
   \mathrm{diag}(S)
   =
   [\Vert \Delta\theta^{(1)} \Vert_2^2, \dots ,\Vert \Delta\theta^{(m)} \Vert_2^2]^\top .
\end{equation}

\begin{proposition}\label{prop:reformulation}
Consider the following sampling-based projection problem:
\begin{equation}\label{eq:sampled_projection}
   \begin{aligned}
      \min_{\xi} ~
         & (\xi -e_m)^\top S(\xi -e_m) \\
      \mathrm{s.t.} ~
         &
         (1-\mathbf{1}^\top \xi)g(\theta_t)
         +G\xi
         +\frac{1}{2}
         \left(\xi^\top S\xi + |\xi|^\top \mathrm{diag}(S)\right)L
         \le \mathbf{0},
   \end{aligned}
\end{equation}
where $e_m=[0,\ldots,0,1]^\top\in\mathbb R^m$ and
$|\xi|=[|\xi_1|,\ldots,|\xi_m|]^\top$.

Suppose $g(\theta_t)\le \mathbf{0}$ and
$\Vert \Delta\theta_{\mathrm{raw}} \Vert_2 \le r$. If $\xi^\star$ solves
\eqref{eq:sampled_projection}, then the induced update
$\Delta\theta^\star = D\xi^\star$ satisfies
\[
\Vert \Delta\theta^\star \Vert_2
\le
\Vert \Delta\theta_{\mathrm{raw}} \Vert_2
\le r,
\]
and is feasible for the local surrogate constraint in
\eqref{eq:safety_projection_problem_linear}. Moreover, under the
assumption $\Delta\theta_{\mathrm{raw}}=De_m$, the objective of
\eqref{eq:safety_projection_problem_linear} restricted to
$\Delta\theta\in\mathrm{span}(D)$ is equal to the objective in
\eqref{eq:sampled_projection}.
\end{proposition}

\begin{proof}
    \ifextended
   See Appendix A.
   \else
    See Appendix A in the extended version of this paper~\citep{arxiv}. 
    \fi
\end{proof}

Since $S = D^\top D \succeq 0$ and $\mathrm{diag}(S) \ge 0$ elementwise, the objective and each quadratic constraint in \eqref{eq:sampled_projection} is convex in $\xi$.
The problem can thus be viewed as a Euclidean projection onto a convex set. 


Note that the problem has complexity that depends on $m$ and $n_g$ but is independent of the model dimension $d$, and it can be solved efficiently when $m$ and $n_g$ are of moderate size.

\begin{cor}
   Let the constraints $g$ correspond to~\eqref{eq:input-feasibility}-\eqref{eq:lyap-margin-preservation}. $x_e$ is locally asymptotically stable for the closed-loop system $x_{k+1} = f(x_k, \pi_{\theta_i}(x_k))$, and $\Omega$ is forward invariant under $\pi_{\theta_i}$, $\forall i \ge 0$.  
\end{cor}

\subsection{Monotonic Improvement}
In this section, we show that the projection in our approach leads to monotonic improvement of the objective, and the sequence of $\mathcal{L}(\theta_t)$ converges under standard assumptions. 

\begin{proposition}\label{thm:decreasing-direction}
    At $t$, the optimal update $\Delta \theta^\star_t = D_t \xi^\star_t$ from~\eqref{eq:sampled_projection} is in a descent direction in $\mathcal{L}$ at $\theta_t$ if $\Delta \theta^\star_t \ne 0$. 
\end{proposition}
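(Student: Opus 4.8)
The plan is to reduce the descent claim to a single scalar inequality and then read that inequality off the first-order optimality condition of the convex projection~\eqref{eq:sampled_projection}. A direction $\Delta\theta^\star_t$ is a strict descent direction precisely when $\nabla_\theta\mathcal{L}(\theta_t)^\top \Delta\theta^\star_t < 0$. I would exploit the identification of the raw step with the last basis column: by construction $\Delta\theta_{\text{raw}} = \Delta\theta_m = D_t e_m$, and by~\eqref{eq:unconstrained-gradient-step} we have $\Delta\theta_{\text{raw}} = -\eta\,\nabla_\theta\mathcal{L}(\theta_t)$ with $\eta = 1/(2\lambda) > 0$. Writing $\Delta\theta^\star_t = D_t c^\star_t$ and letting $S = D_t^\top D_t$ be the Gram matrix, this gives
\[
\nabla_\theta\mathcal{L}(\theta_t)^\top \Delta\theta^\star_t
= -\tfrac{1}{\eta}\,(D_t e_m)^\top D_t c^\star_t
= -\tfrac{1}{\eta}\,e_m^\top S c^\star_t,
\]
so it suffices to prove $e_m^\top S c^\star_t > 0$.

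Next I would record two structural facts about~\eqref{eq:sampled_projection}. First, the objective $(c-e_m)^\top S(c-e_m) = \|D_t c - \Delta\theta_{\text{raw}}\|_2^2$ is a convex quadratic since $S \succeq 0$, and the feasible set $\mathcal{F}$ is convex (as established immediately after~\eqref{eq:sampled_projection}); hence $c^\star_t$ is a Euclidean projection of $e_m$ onto $\mathcal{F}$ in the $S$-seminorm. Second, $c = 0$ is feasible: substituting $c = 0$ collapses the constraint to $g(\theta_t) \le \mathbf{0}$, which holds by the safety invariant of Proposition~\ref{prop:safety_guarantee}. Thus $\mathcal{F}$ is a convex set containing the origin, and $c^\star_t$ is the projection of $e_m$ onto it.

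The heart of the argument is the variational inequality for this projection. Convexity and differentiability give the first-order optimality condition $(c^\star_t - e_m)^\top S (c - c^\star_t) \ge 0$ for every $c \in \mathcal{F}$. Evaluating it at the feasible point $c = 0$ yields $-(c^\star_t - e_m)^\top S c^\star_t \ge 0$, i.e.
\[
e_m^\top S c^\star_t \;\ge\; (c^\star_t)^\top S c^\star_t \;=\; \|D_t c^\star_t\|_2^2 \;=\; \|\Delta\theta^\star_t\|_2^2 .
\]
Combining this with the first paragraph gives $\nabla_\theta\mathcal{L}(\theta_t)^\top \Delta\theta^\star_t \le -\tfrac{1}{\eta}\|\Delta\theta^\star_t\|_2^2$, which is strictly negative exactly when $\Delta\theta^\star_t \ne 0$, establishing strict descent.

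The one point requiring care is that $S = D_t^\top D_t$ is only positive semidefinite, so $\|\cdot\|_S$ is a seminorm and $c^\star_t$ need not be unique; I expect this to be the main (though mild) obstacle, and it is one of presentation rather than substance. It is harmless because every quantity above is written through $c^\top S c = \|D_t c\|_2^2$, the genuine weight-space norm, and the nondegeneracy hypothesis $\Delta\theta^\star_t \ne 0$ is literally $(c^\star_t)^\top S c^\star_t > 0$, which is what makes the final inequality strict. A secondary check is that the optimality condition holds without an additional constraint qualification, which follows since we minimize a convex differentiable objective over the convex set $\mathcal{F}$.
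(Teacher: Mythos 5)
Your proof is correct and takes essentially the same route as the paper's: both rest on the variational inequality of the Euclidean projection onto the convex feasible set, evaluated at the origin (feasible because $g(\theta_t)\le 0$), which yields $\langle \Delta\theta_{\text{raw}}, \Delta\theta^\star_t\rangle \ge \Vert\Delta\theta^\star_t\Vert_2^2$ and then strict descent via $\Delta\theta_{\text{raw}} = -\eta\nabla_\theta\mathcal{L}(\theta_t)$. The only difference is presentational — you state the optimality condition in coefficient space with the $S$-seminorm (and carefully note $S\succeq 0$ suffices), while the paper maps $\mathcal{C}_t$ to the weight-space set $\Theta_t$ and invokes the projection property there — but since $e_m^\top S c = \Delta\theta_{\text{raw}}^\top D_t c$, these are the same inequality.
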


\begin{proof}
    \ifextended
    See Appendix B.
    \else
    See Appendix B in the extended version of this paper~\citep{arxiv}. 
    \fi
\end{proof}

Proposition~\ref{thm:decreasing-direction} indicates that with a standard Armijo-style backtracking line search, which we describe below, the algorithm provides monotonic improvement in the objective without constraint violation. 

\begin{lemma}[Feasibility of scaled updates]\label{prop:backtracking-feasibility}
    If $g(\theta_t) \leq 0$, and $\bar{\xi}$ is a feasible solution to~\eqref{eq:sampled_projection}, then $\forall \rho \in [0, 1]$, $\rho \bar{\xi}$ is also feasible. 
\end{lemma}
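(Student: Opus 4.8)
The plan is to recognize that the scaled iterate $k\bar c = (1-k)\cdot\mathbf{0} + k\,\bar c$ is a convex combination of $\mathbf{0}$ and $\bar c$, and then to show that the feasible set of~\eqref{eq:projection_subspace} is convex and contains both of these points. Once this is established, convexity of the feasible set immediately yields that the entire segment $\{k\bar c : k \in [0,1]\}$ is feasible. So the proof reduces to two checks: (i) each constraint is a convex function of $c$, and (ii) the origin $c = \mathbf{0}$ is feasible.

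For (i), I would write the $j$-th constraint of~\eqref{eq:projection_subspace} as
\[
h_j(c) \;=\; [g(\theta_t)]_j + [J_g(\theta_t)Dc]_j + \tfrac{1}{2}\,\|Dc\|_2^2\,L_j .
\]
The first term is constant, the second is linear in $c$, and the third equals $\tfrac{1}{2}L_j\,c^\top S c$ with $S = D^\top D \succeq 0$ and $L_j \ge 0$ by Assumption~\ref{ass:l_smooth}; hence it is a nonnegatively weighted convex quadratic. Each $h_j$ is therefore convex, and the feasible set, being an intersection of sublevel sets $\{h_j \le 0\}$, is convex. For (ii), evaluating at $c = \mathbf{0}$ collapses the linear and quadratic terms, leaving $h_j(\mathbf{0}) = [g(\theta_t)]_j \le 0$ by the standing hypothesis $g(\theta_t)\le\mathbf{0}$, so the origin is feasible.

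Combining the two checks with feasibility of $\bar c$, convexity gives $h_j(k\bar c) \le (1-k)h_j(\mathbf{0}) + k\,h_j(\bar c) \le 0$ for every $k \in [0,1]$ and every $j$, which is the claim. I do not expect a genuine obstacle here: the argument is routine once the problem is cast as containment of the segment $[\mathbf{0},\bar c]$ in a convex feasible set. The only points meriting care are that origin feasibility is not automatic but follows precisely from the hypothesis $g(\theta_t) \le \mathbf{0}$, and that the quadratic term contributes convexly only because the smoothness constants satisfy $L_j \ge 0$. I would also note that the identical reasoning applies to the sampled reformulation~\eqref{eq:sampled_projection} actually solved by the algorithm: its constraints are convex in $c$ and the origin $c=\mathbf{0}$ reduces them to $g(\theta_t)\le\mathbf{0}$, so the backtracking line search that scales $\bar c$ by $k\in[0,1]$ preserves feasibility in the form used in practice.
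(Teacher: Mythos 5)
Your proof is correct and takes essentially the same route as the paper's: the paper likewise argues that the feasible set of~\eqref{eq:projection_subspace} is convex, that $g(\theta_t) \le \mathbf{0}$ makes $c = \mathbf{0}$ feasible, and that $k\bar{c}$ is then feasible as a convex combination of $\mathbf{0}$ and $\bar{c}$. Your write-up simply fills in the details the paper leaves implicit (verifying convexity of each constraint via $S = D^\top D \succeq 0$ and $L_j \ge 0$), and your closing remark that the same argument covers~\eqref{eq:sampled_projection} matches the convexity claim the paper states for that problem.
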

Lemma~\ref{prop:backtracking-feasibility} holds because the feasible set of~\eqref{eq:sampled_projection} is convex, and $g(\theta_t) \leq 0$ indicates $\xi = 0$ is a feasible solution. 

\begin{assumption}\label{asm:l-smoothness-and-boundedness-of-objective}
    Assume the objective $\mathcal{L}$ is $C^1$, $L_{\mathcal{L}}$-smooth and bounded from below. Formally, 
    \begin{equation}
        \begin{aligned}
            \Vert \nabla \mathcal{L}(\theta_1) - \nabla \mathcal{L}(\theta_2) \Vert &\leq L_{\mathcal{L}} \Vert \theta_1 - \theta_2 \Vert,~\forall \theta_1, \theta_2, \\
            \inf_\theta \mathcal{L}(\theta) &> -\infty. 
        \end{aligned}
    \end{equation}
\end{assumption}

\begin{theorem}[Monotonic improvement]\label{prop:backtracking-line-search}
Since $\Delta \theta^\star_t$ is a descent direction if $\Delta \theta^\star_t \ne 0$, as shown in Proposition~\ref{thm:decreasing-direction}, and $\mathcal{L}$ has a Lipschitz continuous gradient, an Armijo backtracking line search finds $\tau \in (0, 1]$, such that
    \begin{equation}
        \mathcal{L}(\theta_t + \tau \Delta \theta^\star_t) \leq \mathcal{L}(\theta_t) - \sigma \tau \eta^{-1} \Vert \Delta \theta^\star_t \Vert^2, \exists \sigma \in (0, 1). 
    \end{equation}
    Note that Lemma~\ref{prop:backtracking-feasibility} guarantees the feasibility of the backtracked solution $\theta_{t+1} = \theta_t + \tau \Delta \theta^\star_t$.  
\end{theorem}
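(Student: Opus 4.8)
The plan is to reduce the statement to the textbook Armijo argument, feeding in the two ingredients the theorem cites: the strict descent property of Proposition~\ref{thm:decreasing-direction} and the $L_\mathcal{L}$-smoothness of Assumption~\ref{asm:l-smoothness-and-boundedness-of-objective}. Write $d_t = \Delta\theta^\star_t$ and assume $d_t \neq 0$ (otherwise there is nothing to prove). The crucial quantitative fact carried over from~\eqref{eq:descent-direction-proof} is
\[
\nabla_\theta \mathcal{L}(\theta_t)^\top d_t \;\le\; -\tfrac{1}{\eta}\,\|d_t\|^2 \;<\; 0 ,
\]
so $d_t$ is not merely a descent direction but gives a directional derivative controlled below by a fixed multiple of $\|d_t\|^2$; this is exactly what lets the target inequality be phrased in terms of $\|\Delta\theta^\star_t\|^2$ rather than the directional derivative.

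Next I would invoke the descent lemma, the standard consequence of $L_\mathcal{L}$-smoothness, to obtain for every $\alpha \in [0,1]$ the quadratic upper bound
\[
\mathcal{L}(\theta_t + \alpha d_t)
\;\le\;
\mathcal{L}(\theta_t) + \alpha\,\nabla_\theta\mathcal{L}(\theta_t)^\top d_t + \tfrac{L_\mathcal{L}}{2}\,\alpha^2 \|d_t\|^2 .
\]
Substituting the descent bound above and collecting terms gives
\[
\mathcal{L}(\theta_t + \alpha d_t)
\;\le\;
\mathcal{L}(\theta_t) - \alpha\Bigl(\tfrac{1}{\eta} - \tfrac{L_\mathcal{L}}{2}\alpha\Bigr)\|d_t\|^2 .
\]
I would then fix any $\sigma \in \bigl(0,\min\{1,\,1/\eta\}\bigr)$, which is a nonempty interval since $\eta = 1/(2\lambda) > 0$, and observe that the desired sufficient-decrease inequality holds as soon as $\tfrac{1}{\eta} - \tfrac{L_\mathcal{L}}{2}\alpha \ge \sigma$, i.e. for every $\alpha \le \bar\alpha := \tfrac{2}{L_\mathcal{L}}\bigl(\tfrac{1}{\eta}-\sigma\bigr) > 0$.

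Finally I would describe the backtracking loop: start at $\alpha = 1$ and shrink by a factor $\beta \in (0,1)$ until the Armijo test passes. Since the test is guaranteed to pass for every $\alpha \le \bar\alpha$, the loop terminates after finitely many steps, and the accepted step is bounded below by a strictly positive constant (of order $\min\{1,\beta\bar\alpha\}$), so $\alpha \in (0,1]$ and the decrease inequality holds. Feasibility is then immediate: the accepted update is $\theta_{t+1} = \theta_t + \alpha d_t = \theta_t + D_t(\alpha c^\star_t)$ with $\alpha \in (0,1]$ and $c^\star_t$ feasible for~\eqref{eq:projection_subspace}, so Lemma~\ref{prop:backtracking-feasibility} applied to $\bar c = c^\star_t$ and $k = \alpha$ guarantees $g(\theta_{t+1}) \le 0$. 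The only genuine care point, and the step I would flag as the main obstacle, is coordinating the choice of $\sigma$ with the descent constant $1/\eta$: one must keep $\sigma < 1/\eta$ so that the threshold $\bar\alpha$ stays strictly positive while simultaneously keeping $\sigma \in (0,1)$; everything else is the classical line-search computation.
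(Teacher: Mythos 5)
Your proposal is correct and takes essentially the route the paper intends: the paper gives no separate proof body, treating the theorem as the standard consequence of the quantitative descent bound~\eqref{eq:descent-direction-proof} from Proposition~\ref{thm:decreasing-direction}, the descent lemma under Assumption~\ref{asm:l-smoothness-and-boundedness-of-objective}, and Lemma~\ref{prop:backtracking-feasibility} for feasibility, which is exactly what you flesh out. Your coordination of $\sigma \in \bigl(0,\min\{1,1/\eta\}\bigr)$ with the threshold $\bar\alpha = \tfrac{2}{L_\mathcal{L}}\bigl(\tfrac{1}{\eta}-\sigma\bigr)$ is a valid (and more explicit) handling of the detail the paper leaves implicit, and your feasibility step correctly mirrors the paper's implicit identification of feasibility for~\eqref{eq:sampled_projection} with feasibility for~\eqref{eq:projection_subspace}.
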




\subsection{Algorithm}
In this section, we present a practical implementation of our method in Algorithm~\ref{alg:S-CPO}.
At iteration $t$, we maintain recent parameters $\theta_{t-m:t}$ and safety estimates $g_{t-m:t}$ to construct $D$ and $G$.

For safety evaluation, we use a curated validation set of safety-critical scenarios as a tractable surrogate for exhaustive verification. Designing such validation sets optimally is left for future work.

\begin{algorithm}[htbp]
\caption{Constrained Policy Optimization with Sampling-Based Weight-Space Projection (SCPO)}
\label{alg:S-CPO}
\begin{algorithmic}[1]
    \Require Feasible initialization $\theta_0$ such that $g(\theta_0)\le 0$;
    maximum sample bank size $m$; gradient step size $\eta>0$;
    curvature inflation factor $\kappa>1$
    \State $D \gets$ deque(maxlen=$m$); \quad $G \gets$ deque(maxlen=$m$)
    \State Evaluate $g(\theta_0)$
    \State Append $\Delta\theta_0\gets \mathbf 0$ to $D$
    \State Append $g_0\gets g(\theta_0)$ to $G$
    \For{$t=0,1,\dots$}
        \State Evaluate $\mathcal L(\theta_t)$ and $g(\theta_t)$
        \State $\theta_{t+\frac12}\gets \theta_t-\eta\nabla_\theta\mathcal L(\theta_t)$
        \State Evaluate $g(\theta_{t+\frac12})$
        \State Append $\Delta\theta_{t+\frac12}\gets \theta_{t+\frac12}-\theta_t$ to $D$
        \State Append $g_{t+\frac12}\gets g(\theta_{t+\frac12})$ to $G$
        \State Estimate $L$ from $D$ and $G$
        \Repeat
            \State $\xi_t^\star\gets$ solution of~\eqref{eq:sampled_projection} using $(D,G,L,g(\theta_t))$
            \State $\tau_t\gets$ step size from backtracking line search
            \State $\theta_{\mathrm{cand}}\gets \theta_t+\tau_t D \xi_t^\star$
            \State Evaluate $g(\theta_{\mathrm{cand}})$
            \If{$g(\theta_{\mathrm{cand}})\not\le 0$}
                \State $L\gets \kappa L$ \Comment{Increase conservativeness}
            \EndIf
        \Until{$g(\theta_{\mathrm{cand}})\le 0$}
        \State $\theta_{t+1}\gets \theta_{\mathrm{cand}}$
        \State $D\gets D-\tau_t D\xi_t^\star\mathbf 1^\top$
        \Comment{Recenter samples around $\theta_{t+1}$}
    \EndFor
\end{algorithmic}
\end{algorithm}

\ifextended
\subsection{Connections to Prior Work}

The proposed update rule can be interpreted as a projected gradient step in parameter space. 
In this sense, our method is structurally related to projected gradient descent and to constrained policy update methods such as CPO~\citep{achiamConstrainedPolicyOptimization2017}. We highlight here the main conceptual differences.

First, the projection in CPO involves solving a constrained natural-gradient step on linearized constraint cost and surrogate objective with a KL trust region in policy space. 
Our method instead defines a local feasible set directly in weight space from sampled trajectories and safety metrics, and uses a quadratic regularization in weight space instead of a fixed trust region. 

Second, we optimize over the low-dimensional coefficient vector $\xi$, which yields a lightweight QCQP in $\mathbb{R}^{m}$. 
By contrast, CPO's constrained step is defined in the full parameter space $\mathbb{R}^d$ and is typically approximated with matrix-free solvers in the Fisher metric.

Third, CPO and related CMDP methods impose constraints through explicit constraint cost functions. 
In our framework, safety is enforced through trajectory-based metrics $g(\theta)$ computed on a finite set of rollouts, without requiring a separate CMDP-style constraint cost. 
\fi


\section{Experiments}\label{sec:experiments}
We evaluate weight-space safety projection on both a toy regression problem and a constrained control task. The experiments are designed to answer two main questions:
\begin{itemize}
    \item \textbf{Robustness to harmful supervision.} 
    Can the proposed projection reject harmful updates while still making meaningful progress on the learning objective?
    
    \item \textbf{Safety preservation during learning.} 
    Does the learned policy remain safe throughout training? In the control setting, does the policy remain stabilizing at every training epoch?
\end{itemize}



\subsection{Constrained Regression}
We first consider a one-dimensional constrained regression problem to illustrate how the proposed method filters harmful updates.
\ifextended

In this regression example only, $f$ denotes the scalar reference function, not the system dynamics.
\fi
The constrained learning objective is
\begin{equation}\label{eq:constrained-regression-problem}
    \begin{aligned}
        \min_{\theta} \quad & 
        \mathbb{E}_{x \sim \mathcal{N}(0,1)}\!\left[\,
        \bigl(f(x)-\pi_\theta(x)\bigr)^2 \right] \\
        \text{s.t.}\quad & 
        \left|\pi_\theta(x)\right| \le 1.4,\quad \forall x\in[-3,3],
    \end{aligned}
\end{equation}
where the reference function is
$f(x)=\sin(x)+\sin(3x)+\sin(7x)$.
Note that $f$ violates the constraint on part of $[-3,3]$, so na\"{i}vely regressing onto $f(x)$ leads to constraint violation.

Let the safe policy be $\pi_{\text{safe}}(x)\equiv 0$, which satisfies the constraint. 
We enforce safety on a uniform grid $\mathbf{v}\in\mathbb{R}^{64}$ over $[-3,3]$.
The surrogate constraint violation vector is
\begin{equation}
    g(\theta) 
    = \vert \pi_\theta(\mathbf{v}) \vert - 1.4 \times \mathbf{1},
\end{equation}
where $\pi_\theta(\mathbf{v})$ denotes elementwise evaluation on the grid. 
For reporting, we summarize violation by $\|g(\theta)\|_1/64$.

At each epoch, we draw $N=64$ i.i.d.\ samples from $\mathcal{N}(0,1)$ and minimize the empirical loss
\begin{equation}
    \mathcal{L}(\theta) 
    = \frac{1}{N}\sum_{i=1}^{N} \bigl(f(x_i)-\pi_\theta(x_i)\bigr)^2.
\end{equation}
Fig.~\ref{fig:regression} shows the learned curve after training. 
Starting from a feasible initialization, projection prevents unsafe drift: the policy fits $f$ where it is safe and smoothly saturates at the constraint boundary with no violation. 

\begin{figure}[htbp]
    \centering
    \includegraphics[width=0.9\linewidth]{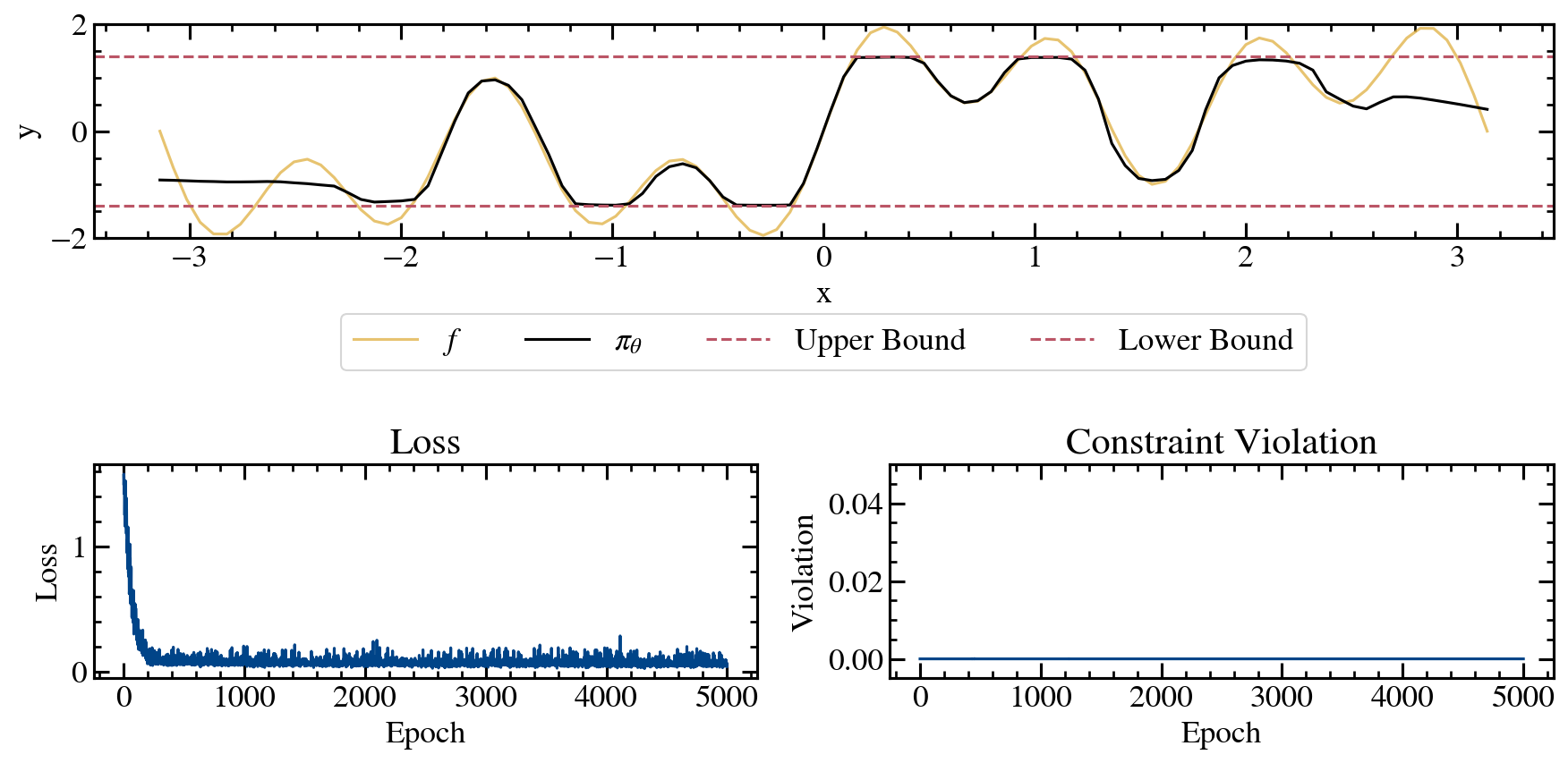}
    \caption{Constrained regression with proposed projection. }
    \label{fig:regression}
\end{figure}


To compare our approach with a standard baseline, we compare against a soft-constraint baseline. Specifically, we replace the hard projection step with an unconstrained optimization problem augmented by a soft penalty: 
\begin{equation}
    \min_{\theta}\; 
    \mathbb{E}_{x \sim \mathcal{N}(0,1)}\bigl[(f(x)-\pi_\theta(x))^2\bigr]
    + \lambda_p \cdot \mathbf{1}^\top (\mathrm{ReLU}(g(\theta))),
\end{equation}
where $\lambda_p > 0$ is a constant. 

\begin{remark}
Note that this comparison is not fully fair: this baseline assumes access to first-order information of $g$, either through a known differentiable constraint or a gradient estimate, as in CPO or PPO-Lagrangian. This is outside our setting, where only pointwise evaluations of $g(\theta)$ are available. The comparison is included to show that our method can achieve comparable performance without this advantage.
\end{remark}

As shown in Fig.~\ref{fig:regression-with-soft-constraint}, even when such a differentiable soft penalty is artificially made available, it still fails to guarantee strict constraint satisfaction and introduces an additional hyperparameter $\lambda_p$ that must be tuned.  

\begin{figure}[htbp]
    \centering
    \includegraphics[width=0.9\linewidth]{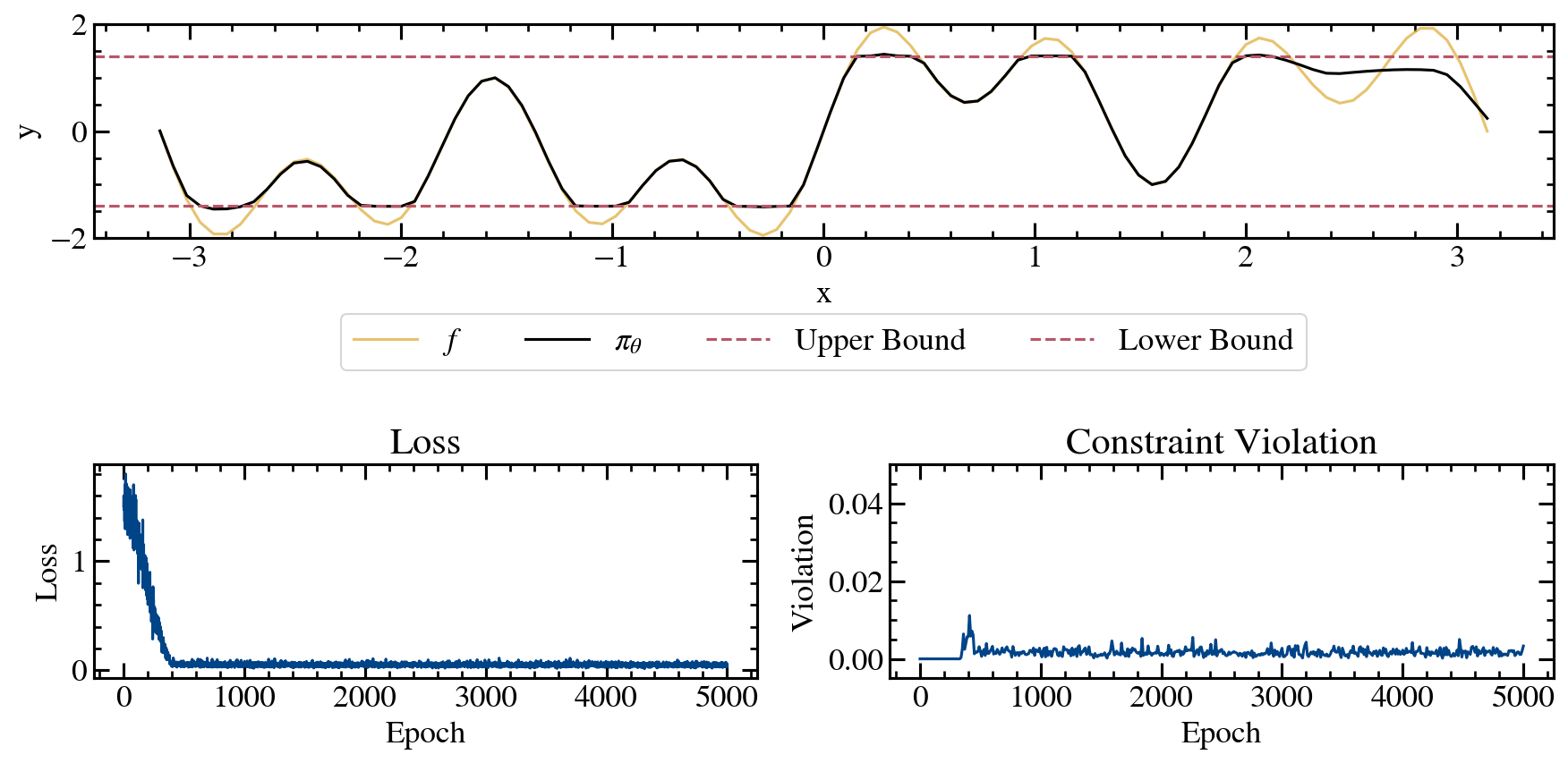}
    \caption{Regression with soft constraint ($\lambda_p = 1$), without proposed projection.}
    \label{fig:regression-with-soft-constraint}
\end{figure}

\subsection{Double Integrator Imitation Learning}

We next evaluate SCPO on a constrained control task where the learning signal is intentionally harmful. The purpose of this experiment is to test whether SCPO can improve the imitation objective while preserving the stabilizing behavior of a known safe controller.

Consider the discrete-time double integrator
\begin{equation}
\label{eq:double_integrator_example}
\begin{aligned}
    x_{k+1} &=
    \begin{bmatrix}1 & \Delta t \\ 0 & 1\end{bmatrix}x_k
    +
    \begin{bmatrix}\frac{1}{2}\Delta t^2 \\ \Delta t\end{bmatrix}u_k, \\
    \begin{bmatrix}-15 \\ -15\end{bmatrix}
    &\le x_k \le
    \begin{bmatrix}15 \\ 15\end{bmatrix}, \qquad
    -1 \le u_k \le 1,
\end{aligned}
\end{equation}
with $\Delta t=0.1$. The input constraint is enforced by clipping all control inputs to $[-1,1]$. The target set is a small neighborhood of the equilibrium at $x_e = \mathbf{0}$,
\begin{equation}
    \mathcal{X}_f=\{x\mid \|x\|_2\le 0.01\}.
\end{equation}

The safe controller is a clipped LQR feedback controller,
\begin{equation}
    \pi_{\mathrm{safe}}(x)=\mathrm{clip}(-Kx,-1,1),
\end{equation}
where $K$ is obtained from the DARE with $Q=I_2$ and $R=I_1$. The expert policy is intentionally chosen to be harmful:
\begin{equation}
    \pi_\beta(x)=\mathrm{clip}\bigl(-2(x_1+x_2)+\delta(x),-1,1\bigr),
\end{equation}
where $\delta(x)$ is generated by a fixed seeded random function. Thus, each state is assigned a fixed perturbation, while the perturbations vary across the state space. This produces an aggressive and noisy expert whose behavior reduces the stabilizing region.

We train $\pi_\theta$ using the on-policy imitation objective
\begin{equation}
    \mathcal{L}(\theta)
    =
    \mathbb{E}_{x\sim\mathcal{D}_\theta}
    \bigl[\|\pi_\theta(x)-\pi_\beta(x)\|^2\bigr],
\end{equation}
where $\mathcal{D}_\theta$ is the state distribution induced by rolling out the current policy.

To instantiate the Lyapunov-margin constraint, we use the value function induced by the safe controller,
\begin{equation}
   V(x_0)
   =
   \sum_{k=0}^{\infty}
   \left[
   \|x_k\|_2^2+\|u_k\|_2^2
   \right],
   \qquad
   u_k=\pi_{\mathrm{safe}}(x_k),
\end{equation}
with the state evolving according to~\eqref{eq:double_integrator_example}. Since $\pi_{\mathrm{safe}}$ stabilizes the origin, this value function is a valid Lyapunov function on its region of attraction.

Fig.~\ref{fig:double-integrator-trajectories} compares closed-loop trajectories from a representative initial condition under $\pi_{\mathrm{safe}}$, $\pi_\theta$, and $\pi_\beta$. The learned policy moves toward the expert behavior while avoiding the destabilizing behavior of the expert, illustrating that the projection rejects harmful updates during training. Fig.~\ref{fig:double-integrator-o-inf} compares the estimated regions from which each controller reaches $\mathcal{X}_f$ without constraint violation. The learned policy preserves the stabilizing region of $\pi_{\mathrm{safe}}$, whereas the malicious expert loses stabilizing capability over a substantial portion of the state space.


\begin{figure}[htbp]
    \centering
    \includegraphics[width=0.65\linewidth]{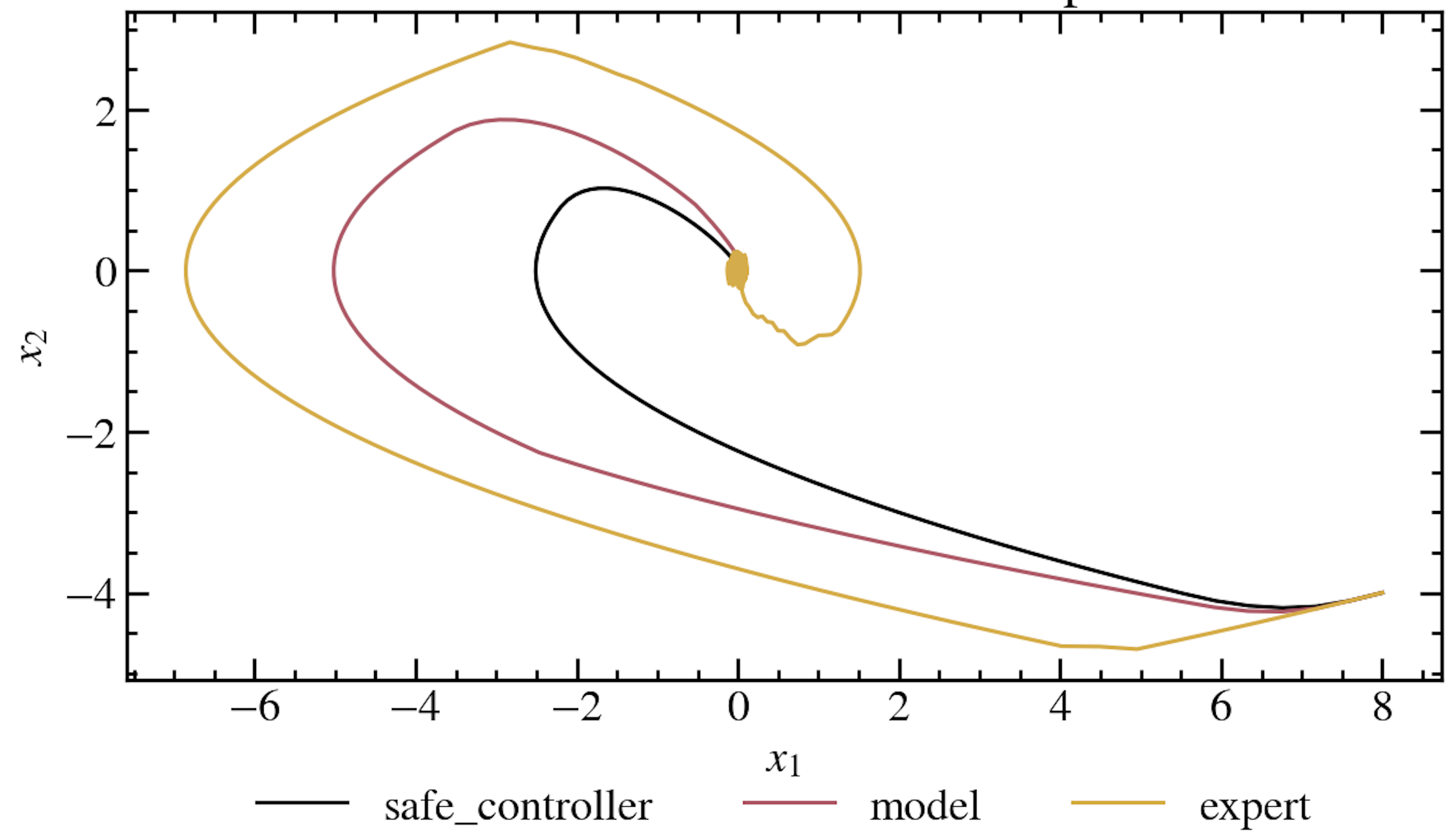}
    \caption{Closed-loop trajectories from an example initial state for the double integrator under $\pisafe$, $\pi_\beta$, and the policy $\pi_\theta$ at epoch 16. }
    \label{fig:double-integrator-trajectories}
\end{figure}


\begin{figure}[htbp]
    \centering
    \includegraphics[width=0.65\linewidth]{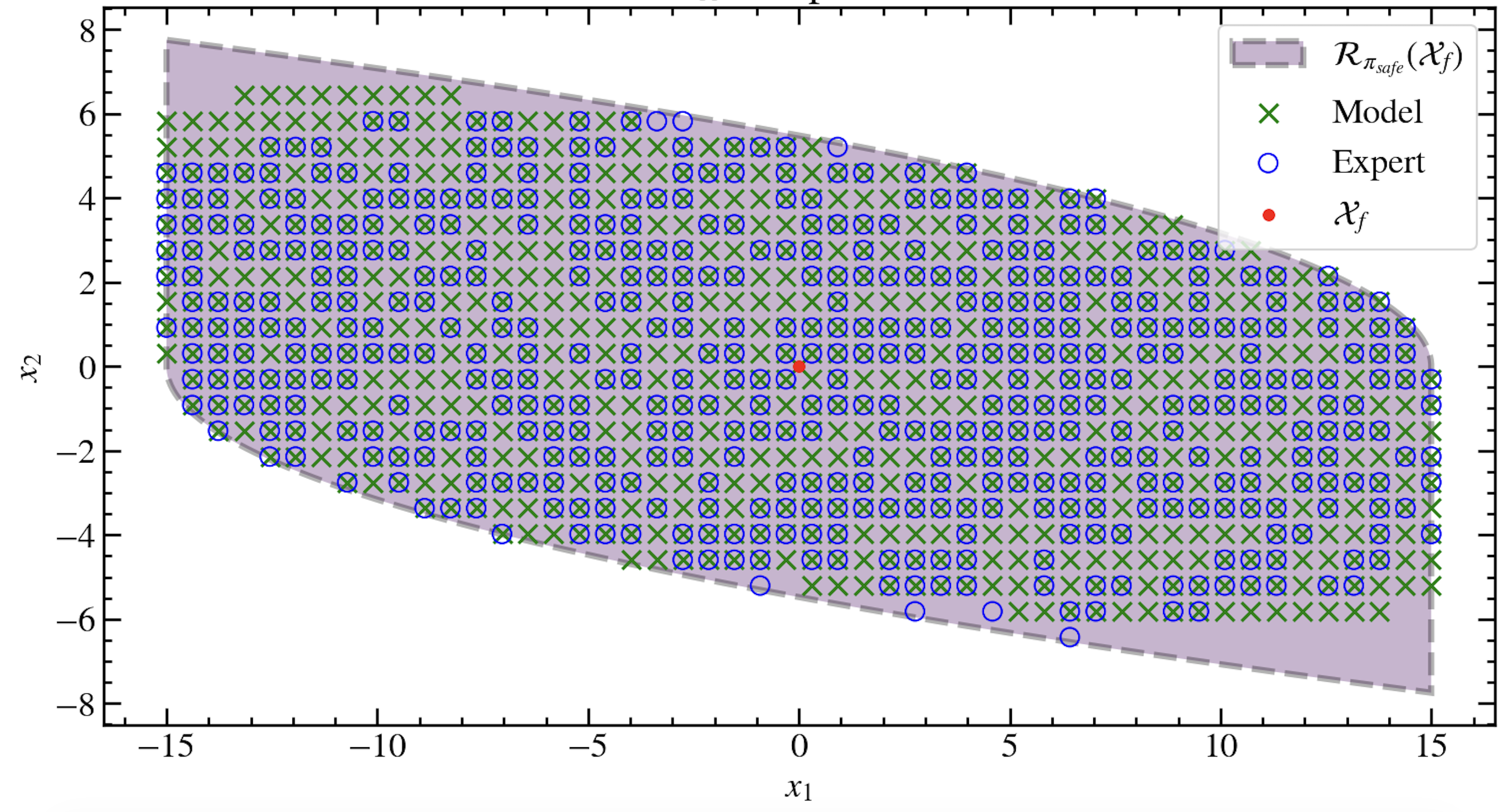}
    \caption{Comparison between the states from which the closed-loop system can reach $\mathcal{X}_f$ under the trained policy $\pi_{\theta_N}$ and the malicious expert $\pi_\beta$. The shaded area is the set of all states from which there exists a stabilizing control sequence.}
    \label{fig:double-integrator-o-inf}
\end{figure}

\ifextended
\section{Conclusion}
We introduced a weight-space projection with guarantees for safe-by-induction feasibility and monotonic improvement. 
A primary limitation is the method's conservativeness as the projection step does not generate new direction to improve. We aim to address this by integrating~\cite{caoSimpleApproachConstraintAware2025} to modify the objective for improved safe exploration.

\begin{ack}
    The authors would like to thank Xiao (Sam) Huang, Hansung Kim, Fionna Koop, and Yilun Ma for the valuable discussions and feedback during the preparation of this manuscript. 
\end{ack}

\section*{DECLARATION OF GENERATIVE AI AND AI-ASSISTED TECHNOLOGIES IN THE WRITING PROCESS}
During the preparation of this work the authors used ChatGPT in order to proofread the manuscript. After using this tool/service, the authors reviewed and edited the content as needed and take full responsibility for the content of the publication.
\fi

\bibliography{ifacconf}             

\ifextended
\appendix
\section{Proof of Proposition~\ref{prop:reformulation}}

\begin{proof}
Under the subspace parameterization $\Delta\theta = D\xi$,
\eqref{eq:safety_projection_problem_linear} becomes
\begin{equation}
    \label{eq:projection_subspace}
    \begin{aligned}
        \min_{\xi} \quad
            & \Vert D\xi - \Delta \theta_{\mathrm{raw}} \Vert_2^2 \\
        \mathrm{s.t.} \quad
            &
            g(\theta_t) + J_g(\theta_t)D\xi
            + \frac{1}{2}\Vert D\xi\Vert_2^2 L
            \le \mathbf{0}.
    \end{aligned}
\end{equation}
Since $\Delta\theta_{\mathrm{raw}}=\Delta\theta^{(m)}=De_m$, the objective is
\begin{equation}
   \Vert D\xi-\Delta\theta_{\mathrm{raw}}\Vert_2^2
   =
   \Vert D(\xi-e_m)\Vert_2^2
   =
   (\xi-e_m)^\top S(\xi-e_m).
\end{equation}

We next upper-bound the unknown Jacobian term using only sampled
evaluations of $g$. Fix a constraint component $g_j$ and a sampled update
$\Delta\theta^{(i)}$. By Assumption~\ref{ass:l_smooth},
\begin{equation}
   \begin{aligned}
      g_j(\theta_t+\Delta\theta^{(i)})
      &=
      g_j(\theta_t)
      +
      \nabla_\theta g_j(\theta_t)^\top \Delta\theta^{(i)}
      +
      \varepsilon_{ij}, \\
      |\varepsilon_{ij}|
      &\le
      \frac{\ell_j}{2}\|\Delta\theta^{(i)}\|_2^2 .
   \end{aligned}
\end{equation}
   Therefore, for any coefficient $\xi_i$,
\begin{equation}
   \begin{aligned}
      \xi_i\nabla_\theta g_j(\theta_t)^\top\Delta\theta^{(i)}
      & \le
      \xi_i\left(
      g_j(\theta_t+\Delta\theta^{(i)})-g_j(\theta_t)
      \right) \\
      & \qquad +
      |\xi_i|\frac{\ell_j}{2}\|\Delta\theta^{(i)}\|_2^2 .
   \end{aligned}
\end{equation}
Summing over $i=1,\ldots,m$ gives
\begin{equation}
    \label{eq:jacobian_bound_compact}
    \begin{aligned}
    \relax[J_g(\theta_t)D\xi]_j
    &=
    \sum_{i=1}^m \xi_i
    \nabla_\theta g_j(\theta_t)^\top\Delta\theta^{(i)} \\
    &\le
    [G\xi]_j
    -
    (\mathbf{1}^\top \xi)g_j(\theta_t)
    +
    \frac{\ell_j}{2}|\xi|^\top\mathrm{diag}(S).
    \end{aligned}
\end{equation}
Moreover,
\[
\|D\xi\|_2^2 = \xi^\top S\xi.
\]
Substituting~\eqref{eq:jacobian_bound_compact} into the conservative
constraint in~\eqref{eq:projection_subspace} gives
\[
(1-\mathbf{1}^\top \xi)g_j(\theta_t)
+
[G\xi]_j
+
\frac{\ell_j}{2}
\left(
\xi^\top S\xi + |\xi|^\top\mathrm{diag}(S)
\right)
\le 0.
\]
Stacking these inequalities for $j=1,\ldots,n_g$ gives the constraint in
\eqref{eq:sampled_projection}. Therefore, any feasible $c$ for
\eqref{eq:sampled_projection} satisfies the local surrogate safety
constraint.

It remains to show that the projected update remains inside the trust
region. Let

\begin{equation}
   \begin{aligned}
   \Xi
   =
   \bigl\{
      \xi\in\mathbb R^m:~&
      (1-\mathbf{1}^\top \xi)g(\theta_t)
      +G\xi \\
      &~~ +\frac{1}{2}
      \left(\xi^\top S\xi+|\xi|^\top\mathrm{diag}(S)\right)L
      \le \mathbf 0
      \bigr\}.
   \end{aligned}
\end{equation}
Since $\ell_j\ge 0$ for all $j$, $\Xi$ is convex. Moreover,
$\xi=0$ belongs to $\Xi$ because
\begin{equation}
    g(\theta_t)\le \mathbf 0.
\end{equation}
Thus, the corresponding feasible set in update space,
\begin{equation}
    D\Xi = \{D\xi:\xi\in\Xi\},
\end{equation}
is also convex and contains the zero update.

The solution $\Delta\theta^\star=D\xi^\star$ is the Euclidean projection of
$\Delta\theta_{\mathrm{raw}}$ onto $D\Xi$. Since $0\in D\Xi$,
the projection cannot increase the norm:
\begin{equation}
    \|\Delta\theta^\star\|
    \le
    \|\Delta\theta_{\mathrm{raw}}\|.
\end{equation}
Because the raw update is scaled to satisfy
$\|\Delta\theta_{\mathrm{raw}}\|\le r$, we obtain
\begin{equation}
    \|\Delta\theta^\star\|\le r.
\end{equation}
Hence the projected update remains in the trust region, and the local
smoothness bound applies.
\end{proof}

\section{Proof of Proposition~\ref{thm:decreasing-direction}}

\begin{proof}
    Let $\mathcal{C}_t$ be the feasible set of~\eqref{eq:sampled_projection} at $t$.
    If $e_m \in \mathcal{C}_t$, then $\xi^\star_t = e_m$, and $\Delta \theta_t^\star = \Delta \theta_{\text{raw}}$ is a descent direction. 
    
    Let $\Theta_t = \{\Delta \theta_t = D_t \xi \mid \xi \in \mathcal{C}_t \}$. 
    If $e_m \notin \mathcal{C}_t$, note that $\Theta_t$ is an image of the convex set $\mathcal{C}_t$ under a linear transformation, and therefore $\Theta_t$ is convex in $\Delta \theta_t$, and~\eqref{eq:sampled_projection} is a Euclidean projection onto a convex set. 
    By hyperplane separation theorem, \begin{equation}\label{eq:Euclidean-projection-property}
        \langle \Delta \theta_{\text{raw}} -\Delta \theta^\star_t, x - \Delta \theta^\star_t \rangle \leq 0, \forall x \in \Theta_t.
    \end{equation}
    Note that $\mathbf{0} \in \mathcal{C}_t$ following the assumption that $g(\theta_t) \leq 0$. 
    Let $x = \mathbf{0}$, 
    \begin{equation}
        \langle \Delta \theta_{\text{raw}}, \Delta \theta^\star_t \rangle \ge \Vert \Delta \theta^\star_t \Vert_2^2. 
    \end{equation}
    Recall $\Delta \theta_{\text{raw}} = -\eta \nabla_\theta \mathcal{L}(\theta_t)$, where $\eta > 0$. Therefore, 
    \begin{equation}\label{eq:descent-direction-proof}
        \begin{aligned}
            -\nabla_\theta \mathcal{L}(\theta_t)^\top \Delta \theta^\star_t & \geq \frac{1}{\eta} \Vert \Delta \theta^\star_t \Vert_2^2 > 0.
        \end{aligned}
    \end{equation} 
\end{proof}
\fi

\end{document}